\theoremstyle{plain}
\newtheorem{thm}{Theorem}
\newtheorem{prob}{Problem}
\newtheorem{prop}[thm]{Proposition}
\newtheorem{lem}[thm]{Lemma}
\theoremstyle{definition}
\newtheorem{df}[thm]{Definition}
\newtheorem{eg}[thm]{Example}
\def\rank{\textrm{rank }}
\def\pr{\textrm{pr}}
\def\id{\textrm{id}}
\def\image{\textrm{image }}
\def\col{\mathcal}
\title{A Topological Lowpass Filter for Quasiperiodic Signals}
\author{Michael Robinson$^{(1)}$~\IEEEmembership{Member, IEEE}
\thanks{(1) Mathematics and Statistics, American University, Washington, DC, USA email: michaelr@american.edu}}
\begin{document}
\maketitle

\begin{abstract}
This article presents a two-stage topological algorithm for recovering an estimate of a quasiperiodic function from a set of noisy measurements.  The first stage of the algorithm is a topological phase estimator, which detects the quasiperiodic structure of the function without placing additional restrictions on the function.  By respecting this phase estimate, the algorithm avoids creating distortion even when it uses a large number of samples for the estimate of the function.
\end{abstract}


\section{Introduction}

This article addresses the problem of recovering a class of signals with periodic-like structure that are masked both by noise and by a general warping of the domain.  Additive noise is mitigated by averaging over groups of samples, but this requires care to preserve structure of the signal.  If the signal has a definite spectral shape, then a matched linear filter has the optimal weights for the samples to be averaged.  If the signal does not have a definite spectral shape -- for instance, if it is subject to an unknown time warping -- linear matched filters do not exist.  This article presents a novel, two-stage adaptive filter for signals that are subjected to unknown warping of the domain, which may be a general smooth manifold.  We call this filter the \emph{quasiperiodic low pass filter (QPLPF)}.

\subsection{Historical context}

Although \emph{almost periodic} signals -- those within a certain metric distance of a periodic signal -- are a natural generalization beyond periodic signals, they do not accurately represent signals that are periodic under a warped timescale.  These kind of signals are common in music processing \cite{Mueller_2015}.  If the domain has two or more dimensions, then many more possibilities for warping arise.  The path to greater generality is embodied in the two dimensional images captured by cryo-electron microscopy.  These images have a different underlying symmetry group -- the group of rotations in $\mathbb{R}^3$ -- and the smooth structure of this group can be exploited to great effect \cite{Singer_2013}.

Adaptive filters are often used in image processing (for instance \cite{Dabov_2007}, among many others), but ignoring internal structure of the signal can lead to poor results \cite{Willett_2014}.  Class averaging \cite{Singer_2011,vanHeel_1984} is usually presented as a way to ensure that this structure is preserved, but theoretical guarantees are usually given for a specific problem domain.  The QPLPF we present in this paper is a \emph{general} class averaging filter, and is applicable to many problem domains.  To support the broad application of the QPLPF, we impose only weak theoretical constraints on the input signals.  Under these constraints we obtain surprisingly strong theoretical guarantees.

Signals that have a hidden state space are identifiable using the topology of \emph{delay embeddings} \cite{DeSilva_2011}, a concept that can be traced to a paper by Takens \cite{Takens_1981}.  Many papers have discussed ways to find the hidden state of a dynamical system; recovering the \emph{phase space} from measurements \cite{Chelidze_2008,Chelidze_2006,Casdagli_1991, Sauer_1991,Hegger_2002}.  The key theoretical guarantees arise from transversality results for smooth manifolds.  These can be lifted to geometric conditions for recovering state spaces up to topology under noisy conditions \cite{Chazal_2009,Chazal_2006,Niyogi_2008}.  Although the present paper does not require a complete estimation of a topological space, we obtain similar performance bounds in the face of noise.

\section{Problem statement}
We begin by specifying the class of signals of interest: those with nontrivial \emph{quasiperiodic factorizations}.

\begin{df} \cite{RobinsonSampTA2015}
A function $u:M\to N$ from one smooth manifold $M$ to another $N$ is called \emph{$(\phi,U)$-quasiperiodic} if there exists another smooth manifold $C$, a smooth function $U:C \to N$, and a surjective submersion $\phi: M \to C$ such that $u=U \circ \phi$.  We say $u$ \emph{factors through} $\phi$ and call $C$ the \emph{phase space}. 
\end{df}

Quasiperiodic functions are a strict generalization of dynamically time warped functions, in which the phase function $\phi:\mathbb{R}\to\mathbb{R}$ is a diffeomorphism.  We treat dynamic time warping experimentally in Section \ref{sec:results}, though our algorithm works for all quasiperiodic functions as shown by Theorem \ref{thm:phase_recovery} (noisless case) and in Section \ref{sec:noise_perf} (noisy case).  Although our simulated data is rather simplistic, we note that Theorem \ref{thm:phase_recovery} establishes a substantially more general condition for class averaging.

The main problem addressed by this article is the following:
\begin{prob}
  Assume the following:
  \begin{enumerate}
  \item $M$ is a finite dimensional manifold,
  \item $N$ is a finite dimensional vector space,
  \item $n$ is a random field $M \to N$ whose values are identically distributed and independent from one another, and
  \item $M$ is acted upon transitively by a group $G$ of diffeomorphisms.
  \end{enumerate}
  Given a function $\tilde{u}(x)=u(x)+n(x)$ consisting of the sum of a $(\phi,U)$-quasiperiodic function $u:M\to N$ and a noise signal $n:M\to N$, recover an estimate of $u$.  We will assume that $\tilde{u}$ is only specified at a discrete set of points $X=\{x_i\} \subset M$.
\end{prob}

\section{Algorithm description}

The \emph{quasiperiodic low pass filter} (QPLPF) estimates $u$ from samples of $\tilde{u}$ and is tuned by several parameters:
\begin{enumerate}
\item The \emph{delays} $g_1, \dotsc, g_m \subset G$, and
\item The \emph{neighborhood size} $S$, which is a positive integer.
\end{enumerate}
The QPLPF consists of two distinct stages:
\begin{enumerate}
\item \emph{Topological estimation,} a discrete estimation of the phase function $\phi$.  This stage consists of two steps:
  \begin{enumerate}
  \item \emph{Delay immersion,} constructing an auxillary phase function $F: M \to N^{m+1}$
    \begin{equation*}
      F(x) = \left(\tilde{u}(x),\tilde{u}(g_1 x), \dotsc, \tilde{u}(g_m x)\right),
    \end{equation*}
    using a fixed set $\{g_1, \dotsc, g_m\} \subset G$ of group elements to translate copies of $\tilde{u}$.
  \item \emph{Discretization,} which extracts a distance-based graph $H$ using the set $X \subseteq M$ as vertices based on the image of $F$.  Since $N$ is a normed vector space, we can select a metric $d$ on $N^{m+1}$.  For a given $x\in X$, its set of adjacent edges in $H$ is defined to be the $S$ nearest neighbors\footnote{If there are more than $S$ nearest neighbhors, then the adjacent edges are drawn arbitrarily from this set. To simplify the notation we assert that each vertex is adjacent to itself, but that this does not count against the $S$ nearest neighbors.} measured via $d(F(x),F(y))$.  
  \end{enumerate}
\item \emph{Neighborhood averaging,} a statistical estimator for $U$ using the neighborhoods of $H$:
  \begin{equation}
    \label{eq:qplpf_highlevel}
    (\text{QPLPF}\; \tilde{u})(x_i) = \frac{1}{1+ S}\left(\sum_{[x_i,x_j]\in H} \tilde{u}(x_j)\right).
  \end{equation}
\end{enumerate}

\section{Theoretical discussion}

Quasiperiodic factorizations of smooth functions have a number of interesting properties that make them both expressive and useful models of signals.

\begin{eg}
Every smooth function $u:M\to N$ has a \emph{trivial quasiperiodic factorization}, namely $(\id_M, u)$, where $\id_M:M\to M$ is the identity function.  The QPLPF filter reduces to a sliding window average on functions that have \emph{only} the trivial factorization.
\end{eg}

\begin{eg}
Consider the phase modulated sinusoid $u(t) = \sin \left(\phi(t)\right)$ for $t\in (-\infty,\infty)$.  If we use $\phi: \mathbb{R} \to S^1$, where $S^1 = \{(x,y) \in \mathbb{R}^2: x^2+y^2=1\}$ is the unit circle and $U: S^1 \to \mathbb{R}$ with $U(x,y) = y$, then $U \circ \phi = u$.  This is a nontrivial quasiperiodic factorization of $u$ if the derivative of $\phi$ is never zero.
\end{eg}

\begin{prop}
If a smooth function $u$ from a manifold $M$ to a metric space $N$ has a quasiperiodic factorization with a compact phase space, then $u$ is bounded.
\end{prop}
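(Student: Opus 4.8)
The plan is to reduce boundedness of $u$ to the compactness of its image, which factors cleanly through the phase space $C$. The central observation is that, because $\phi:M\to C$ is surjective, the range of $u$ coincides exactly with the range of $U$: from $u = U\circ\phi$ and $\phi(M)=C$ we obtain $u(M) = U(\phi(M)) = U(C)$. Thus the domain warping encoded by $\phi$ is irrelevant to the range of $u$ once we know $\phi$ is onto, and the problem collapses to understanding the single set $U(C)$.

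Next I would establish that $U$ is continuous. Since the definition of a quasiperiodic factorization was stated for a smooth-manifold target while the present hypothesis only asks that $N$ be a metric space, I would not rely on smoothness of $U$ directly. Instead I would use that a surjective submersion $\phi$ is an open map admitting local smooth sections: near any $c\in C$ there is a section $s$ with $\phi\circ s = \id$, so locally $U = U\circ\phi\circ s = u\circ s$ is a composite of continuous maps. Hence $U$ is continuous on all of $C$.

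With these two facts the conclusion is immediate: $C$ is compact by hypothesis, so $U(C)$ is the continuous image of a compact set and is therefore compact in $N$; and every compact subset of a metric space is bounded, having finite diameter since it is covered by finitely many balls of radius one. Therefore $u(M)=U(C)$ is bounded, which is exactly the claim. I do not expect a genuine obstacle here, as the argument is a short composition of standard facts; the only point requiring care is the continuity of $U$, which is why I would route that step through the local-section property of $\phi$ rather than through any notion of smoothness into the metric space $N$.
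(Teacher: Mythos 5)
Your argument is essentially identical to the paper's: the image of $u$ coincides with the image of $U$ since $\phi$ is surjective, the continuous image $U(C)$ of the compact phase space is compact, and compact subsets of a metric space are bounded. Your extra care in deriving the continuity of $U$ via local sections of the submersion (since $N$ is only assumed to be a metric space here) is a small refinement of a point the paper takes for granted, not a different route.
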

Unbounded smooth functions cannot have $S^1$ as a phase space, for instance.
\begin{proof}
Suppose that $u$ is $(\phi,U)$-quasiperiodic and that the domain of $U:C \to N$ is compact.  The image of $u$ coincides with the image of $U$, which is compact since $U$ is continuous.  Thus this image is closed and bounded, hence $u$ is bounded.
\end{proof}

\begin{prop}
Any compactly supported smooth function from $\mathbb{R}^n \to \mathbb{R}$ is quasiperiodic with phase space $C=S^n$. 
\end{prop}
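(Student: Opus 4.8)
The plan is to produce the required data $(C,U,\phi)$ with $C=S^{n}$ by \emph{recording $u$ on almost all of the sphere via stereographic projection and then covering the one remaining pole by folding the far tail of $\mathbb{R}^{n}$, where $u$ vanishes}. Throughout I identify $S^{n}\subset\mathbb{R}^{n+1}$, single out a north pole $N$, and let $\sigma:S^{n}\setminus\{N\}\to\mathbb{R}^{n}$ be stereographic projection, so $\sigma^{-1}:\mathbb{R}^{n}\to S^{n}\setminus\{N\}$ is a diffeomorphism onto its image. Since $u$ is compactly supported I may fix $R$ with $\supp u\subset\{|x|<R\}$, so that $u\equiv 0$ on the entire tail $\{|x|\ge R\}$. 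The first reduction is standard: a surjective submersion between equidimensional manifolds is a local diffeomorphism, so it suffices to build a surjective local diffeomorphism $\phi:\mathbb{R}^{n}\to S^{n}$ for which $u$ is constant on every fiber; the induced map $U$ is then automatically smooth because $\phi$ is a surjective submersion.

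First I would define the candidate $U:S^{n}\to\mathbb{R}$ by $U=u\circ\sigma$ on $S^{n}\setminus\{N\}$ and $U(N)=0$. Because $u$ vanishes on the tail, $U$ vanishes on a whole neighborhood of $N$, so $U$ is smooth. Taking $\phi=\sigma^{-1}$ already gives $U\circ\phi=u$, but $\sigma^{-1}$ fails the one remaining hypothesis: it misses $N$ and so is not surjective. The entire difficulty is therefore to repair surjectivity at the single point $N$ without destroying the submersion property or the fiber-constancy of $u$.

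The key device is to leave $\phi=\sigma^{-1}$ on all but the deepest part of the tail and to \emph{desymmetrize} the map near infinity so that it wraps over $N$. Writing the tail in the chart given by stereographic projection from the south pole, $\sigma^{-1}$ becomes an inversion, which in polar coordinates $y=e^{s}\omega$ on that chart is the ``exponential'' map $(\omega,s)\mapsto e^{s}\omega$ onto a punctured cap about $N$ (with $s\to-\infty$ corresponding to approaching $N$); this has nonvanishing Jacobian everywhere but, being radial, only reaches the center $N$ in the limit. I would modify it to $(\omega,s)\mapsto e^{s}\omega+e^{-T}c\,\chi(s)$, where $\chi$ is a cutoff equal to $1$ deep in the tail and $0$ near the gluing radius, and $c$ is a fixed small vector. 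A constant shift does not change the Jacobian, so the map stays a local diffeomorphism; and with $0<|c|\le e^{-1}$ the shifted punctured cap contains its own center, so $N$ is now attained at the interior point $\omega=-c/|c|$, $s=-T+\log|c|$, as a genuine regular value. The scale $e^{-T}$ keeps the whole modified tail inside an arbitrarily small cap about $N$, hence inside $\{u=0\}$, and the cutoff makes $\phi$ agree with $\sigma^{-1}$ where the two pieces meet, so $\phi$ is globally smooth.

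It then remains to verify the three requirements. Surjectivity holds because $\sigma^{-1}$ on the large ball covers all of $S^{n}$ except a small cap about $N$, while the shifted tail covers that cap together with $N$. The submersion property holds by the Jacobian computations above, once $\chi$ is chosen with bounded derivative so the Jacobian stays nonzero through the transition band. Fiber-constancy is essentially free: taking $T$ large, the points of $\supp u$ lie in the region where $\phi=\sigma^{-1}$ is injective, so their fibers are singletons, while every fiber meeting the modified tail lies entirely in $\{u=0\}$, on which $u$ is constant. I expect the main obstacle to be exactly this middle step---arranging an honest submersion that \emph{attains} the pole $N$---since the natural candidate $\sigma^{-1}$ is non-proper and can only approach $N$ asymptotically; the shift trick is what converts that asymptotic approach into a true regular preimage while confining all the resulting multiplicity to the region where $u$ vanishes.
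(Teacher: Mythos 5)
Your proof is correct, and it follows the same overall strategy as the paper's: view $S^n$ as the one-point compactification of $\mathbb{R}^n$, place the added point $N$ "at infinity" so that compact support forces $U\equiv 0$ on a whole neighborhood of $N$, and let $\phi$ be essentially inverse stereographic projection. Where you go beyond the paper is in confronting the one genuinely nontrivial point, which the paper's one-sentence argument passes over: the definition of a quasiperiodic factorization requires $\phi$ to be a \emph{surjective} submersion, and $\sigma^{-1}:\mathbb{R}^n\to S^n$ is not surjective --- it only approaches $N$ asymptotically and, being a local diffeomorphism from a noncompact space to a compact one, cannot be fixed by any naive reparametrization. Your shift trick (translating the deep tail in the south-pole chart by a small constant so the punctured cap swallows its own center, with a cutoff to glue back to $\sigma^{-1}$ and the scale $e^{-T}$ keeping both the Jacobian perturbation and the extra multiplicity confined to $\{u=0\}$) is a clean, explicit way to manufacture the required surjective local diffeomorphism $\mathbb{R}^n\to S^n$ while preserving $U\circ\phi=u$. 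The only detail worth pinning down in a final write-up is the transition band: the cutoff $\chi$ should be supported at fixed $s$-values near the gluing radius, so that the perturbation $e^{-T}\lvert c\rvert\sup\lvert\chi'\rvert$ to the radial column of the differential is dominated by $e^{s}$ there for $T$ large; as you note, this is immediate once stated. In short, your argument is a completed version of the paper's sketch rather than a different route, and it buys an actual construction of the object whose existence the paper merely asserts.
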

This might be a wildly uninformative quasiperiodic factorization.  There are usually better ones as Proposition \ref{prop:uqf} states.
\begin{proof}
$S^n$ is the one-point compactification of $\mathbb{R}^n$, formed by adding a point at infinity.  Since $u$ is compactly supported, we merely construct $\phi$ so that a neighborhood of infinity in $S^n$ has zero preimage, and then the complement (which includes the support of $u$) is diffeomorphic to $\mathbb{R}^n$.
\end{proof}


\subsection{Obtaining quasiperiodic factorizations}

To establish the theoretical validity of the QPLPF, we show that if $u$ is $(\phi,U)$-quasiperiodic, then the QPLPF will produce a (possibly less compact) quasiperiodic factorization of $u$ in which $F$ is the phase function. 

\begin{lem}
  \label{lem:toprank}
  Suppose $u:M\to N$ is a smooth function.  If $M$ is a compact manifold that is acted upon transitively by a group $G$ of diffeomorphisms, then there is a finite set $\{g_1, \dotsc, g_m\}\subset G$ for which the function $F: M \to N^{m+1}$ given by
  \begin{equation*}
    F(x) = \left(u(x),u(g_1 x), \dotsc, u(g_m x)\right)
  \end{equation*}
  has constant rank
  \begin{equation*}
    \rank dF(x) = \max_{y\in M} \rank du(y)
  \end{equation*}
  for all $x\in M$.
\end{lem}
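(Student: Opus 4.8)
The plan is to pin down the target rank $r:=\max_{y\in M}\rank du(y)$ first, and then engineer the delay set so that $\rank dF$ is squeezed to equal $r$ at every point. Two elementary facts drive everything. First, rank is lower semicontinuous, so the locus $V:=\{y\in M:\rank du(y)=r\}$ is open, and it is nonempty because the maximum is attained; its complement is exactly where $u$ infinitesimally degenerates. Second, because each $g\in G$ is a diffeomorphism, the chain rule gives $d(u\circ g)_x=du_{gx}\circ(dg)_x$ with $(dg)_x$ invertible, hence $\rank d(u\circ g)_x=\rank du_{gx}$. Composing with a group element therefore relocates the rank of $u$ without altering its value, and transitivity is precisely what lets me move the full-rank behavior of $u$ to wherever I need it.

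For the lower bound $\rank dF_x\ge r$ I would argue as follows. Fix $x\in M$. By transitivity there is $g\in G$ with $gx\in V$, so $\rank d(u\circ g)_x=\rank du_{gx}=r$. If $g$ is among the delays, then $u\circ g$ is one of the coordinate blocks of $F$, and since the rank of a map into a product is at least the rank of any single block, $\rank dF_x\ge r$ at that point. To make one finite delay set work at every point at once, observe that for fixed $g$ the full-rank locus $\{x:\rank d(u\circ g)_x=r\}$ is the (open) $g^{-1}$-translate of $V$. As $g$ ranges over $G$ these open sets cover $M$, so compactness of $M$ extracts a finite subcover indexed by $g_1,\dots,g_m$; for this delay set every $x$ lies in some block's full-rank locus and $\rank dF_x\ge r$ holds everywhere (adjoining $g_0=\id$ is harmless).

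Constancy can be secured by the same transitivity-plus-compactness device. Let $R$ be the largest generic rank achievable by any finite delay set; it is bounded by $\dim M$ and so is attained, say on an open set $W$ by some delay family. For each $x$ choose $h\in G$ with $hx\in W$ and adjoin the composed delays $g_ih$; since $d(u\circ g_ih)_x=d(u\circ g_i)_{hx}\circ(dh)_x$, the enlarged block matrix satisfies $\rank dF_x\ge\rank dF_{hx}=R$, and maximality of $R$ forbids overshoot, so after a finite subcover the rank is the constant $R$. The lower bound already forces $R\ge r$, so the whole lemma reduces to the matching upper bound $R\le r$, and this is the step I expect to be the main obstacle. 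The naive estimate only yields $R\le(m+1)r$, because the kernels $\ker d(u\circ g_i)_x=(dg_i)_x^{-1}\ker du_{g_ix}$ need not line up and distinct delayed blocks can inject independent directions into $N^{m+1}$. Controlling this means showing the $g_i$ can be chosen so that these kernels share a common complement of dimension exactly $r$ at every point -- intuitively, so that all delayed copies resolve the same infinitesimal phase -- and verifying that such a parsimonious choice exists is where the real work lies; once it is in hand, $\rank dF\equiv r$ is immediate.
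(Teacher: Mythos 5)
Your first two paragraphs are, in substance, the paper's entire argument: the paper sets $R=\{x\in M:\rank du(x)=\max_{y\in M}\rank du(y)\}$, notes it is open by lower semicontinuity of rank, covers $M$ by its $G$-translates using transitivity, extracts a finite subcover by compactness, and concludes that every $x$ has some $g_ix\in R$ (modulo the same harmless inverse relabeling you perform). This delivers exactly your lower bound $\rank dF(x)\ge r$ at every point, and on this portion you and the paper coincide.

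The gap is the upper bound $\rank dF(x)\le r$, which you explicitly defer (``where the real work lies'') rather than prove, so as submitted the proposal is incomplete. However, you have put your finger on precisely the step the paper's own proof elides: the paper simply asserts $\rank dF(x)=\max\{\rank du(x),\rank du(g_1x),\dotsc,\rank du(g_mx)\}$, and this identity is false for stacked linear maps in general --- the rank of $v\mapsto(Av,Bv)$ equals $\max\{\rank A,\rank B\}$ only when one kernel contains the other, which is exactly the alignment of the kernels $\ker d(u\circ g_i)_x$ you are worried about. The concern is substantive, not cosmetic: take $M=S^2$ with $G=SO(3)$ and $u$ the height function, so $r=1$. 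Any rotation $g$ for which $\ker d(u\circ g)$ agrees with $\ker du$ away from the poles must preserve the polar axis and therefore cannot move the poles into $R$; hence every covering family forces $\rank dF=2$ at generic points. So the missing step cannot be closed by a more parsimonious choice of delays --- no choice achieves constant rank exactly $r$ in that example. What this line of argument can plausibly deliver (and what Theorem \ref{thm:phase_recovery} actually needs) is a \emph{constant} rank, possibly exceeding $r$, along the lines of your third paragraph; but that sketch also needs more care, since you must show that the maximal achievable generic rank is realized simultaneously at every point by a single finite family and that no point overshoots it.
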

\begin{proof}
  Consider the set $R \subseteq M$ given by
  \begin{equation*}
    R = \{x \in M : \rank du(x) = \max_{y\in M} \rank du(y) \}.
  \end{equation*}
  Because $u$ is smooth, it is continuous, so $R$ is open.  Then the collection
  \begin{equation*}
    \col{R} = \{g R : g \in G\}
  \end{equation*}
  is an open cover of $M$ because each $g$ is a diffeomorphism and $G$ acts transitively.  Because $M$ is compact, there is a finite subcollection
  \begin{equation*}
    \col{R}' = \{g_1 R, \dotsc, g_m R \} \subset \col{R}
  \end{equation*}
  that is also an open cover of $M$.  Thus for any $x\in M$, $g_i x \in R$ for at least one of $i = 1, \dotsc, m$.  Thus
  \begin{eqnarray*}
    \rank dF(x) &= &\max\{\rank du(x), \rank du(g_1 x), \dotsc,\\
    && \rank du(g_m x) \} \\
    &=& \rank du(g_i x) \\
    &=& \max_{y\in M} du(y).
  \end{eqnarray*}
\end{proof}

When there is no noise, the topological estimation stage of the QPLPF recovers a quasiperiodic factorization.

\begin{thm}
  \label{thm:phase_recovery}
  Suppose $u:M\to N$ is a smooth function, where $M$ is a compact manifold that is acted upon transitively by a group $G$ of diffeomorphisms.  Using the finite set $\{g_1, \dotsc, g_m\}\subset G$ and the function $F: M \to N^{m+1}$ defined in Lemma \ref{lem:toprank},
  \begin{equation*}
    F(x) = \left(u(x),u(g_1 x), \dotsc, u(g_m x)\right)
  \end{equation*}
  define $C= \image F$.  If $m=0$, then $(F,\id)$ is a quasiperiodic factorization of $u$.  If $m>0$, then
  \begin{enumerate}
  \item $C$ is an immersed submanifold of $N^{m+1}$, let $i:C' \to C \subset N^{m+1}$ be the immersion, and 
  \item $F$ can be pulled back to $\phi : M \to C'$ so that there is a $U:C' \to N$ with $u = U \circ \phi$ being a quasiperiodic factorization.
  \end{enumerate}
\end{thm}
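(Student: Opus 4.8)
The plan is to exploit the constant-rank conclusion of Lemma \ref{lem:toprank} and pass to the quotient of $M$ by the fibers of $F$. Write $r = \max_{y\in M}\rank du(y)$ for the common value of $\rank dF$. The case $m=0$ is immediate: there $F=u$ and $N^{m+1}=N$, so $u = \id\circ F$ exhibits $(F,\id)$ as a factorization. For $m>0$ I would build the phase space $C'$ as the leaf space of the level sets of $F$ and then recognize $C=\image F$ as the image of an injective immersion out of $C'$.

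First I would introduce the relation $x\sim y \iff F(x)=F(y)$ on $M$, set $C' = M/\!\sim$, and let $\phi: M\to C'$ be the quotient map. The central task is to put a smooth manifold structure on $C'$ for which $\phi$ is a surjective submersion, since that is exactly the data a quasiperiodic factorization requires. I would verify this via Godement's criterion applied to $R = \{(x,y): F(x)=F(y)\} = (F\times F)^{-1}(\Delta)\subseteq M\times M$, where $\Delta$ is the diagonal of $N^{m+1}\times N^{m+1}$. Because $N^{m+1}$ is Hausdorff, $\Delta$ is closed and hence $R$ is closed, which forces the quotient to be Hausdorff; second countability descends from $M$ since $\phi$ is open. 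The constant rank of $F$ supplies, through the Rank Theorem, local coordinates in which $F$ is the projection $(x_1,\dots,x_n)\mapsto(x_1,\dots,x_r)$; in such coordinates $R$ is cut out by $x_1=y_1,\dots,x_r=y_r$, so $R$ is an embedded submanifold of codimension $r$ and $\pr_1|_R$ is a submersion onto $M$. Godement then endows $C'$ with the structure of a smooth $r$-manifold making $\phi$ a surjective submersion.

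Next, since $F$ is by construction constant on the fibers of $\phi$, the characteristic property of surjective submersions lets $F$ descend to a smooth injective map $i: C'\to N^{m+1}$ with $F = i\circ\phi$ and $\image i = C$. A rank count finishes the structural claim: $\rank d\phi = r$ everywhere and $\rank dF = r$, so the chain rule $dF = di\circ d\phi$ forces $di$ to be injective, i.e. $i$ is an immersion; thus $C=\image F$ is an immersed submanifold of $N^{m+1}$ realized by $i:C'\to N^{m+1}$. Finally I would define $U = \pr_1\circ i : C'\to N$, where $\pr_1:N^{m+1}\to N$ is projection onto the first factor; then $U\circ\phi = \pr_1\circ i\circ\phi = \pr_1\circ F = u$, and since $\phi$ is a surjective submersion this is the desired quasiperiodic factorization.

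The step I expect to be the main obstacle is showing that $C'$ is genuinely a smooth manifold with $\phi$ a submersion. Leaf spaces of foliations, and images of constant-rank maps in general, are frequently non-Hausdorff or fail to be manifolds at all, so the argument must lean on the specific features available here: the target $N^{m+1}$ is a Hausdorff vector space, which makes $R$ closed, and compactness of $M$ guarantees that $F$ is proper and keeps the chart bookkeeping finite, just as it did in Lemma \ref{lem:toprank}. Matching the target charts of the Rank Theorem simultaneously for two distinct points of a common fiber is the delicate point in verifying the embedded structure of $R$, but it is a local and routine matter once the normal form is in hand.
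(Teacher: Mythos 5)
Your strategy is genuinely different from the paper's: you build the phase space as the quotient $M/\!\sim$ by the fibers of $F$ and invoke Godement's criterion, whereas the paper first realizes $C=\image F$ as an immersed submanifold via the constant-rank theorem, takes $C'$ to be the abstract domain of that immersion $i$, and then lifts $F$ through $i$ by assigning each point lying over a self-intersection of $C$ to the branch whose tangent space matches the image of $dF$. The difference is not cosmetic, and your version has a genuine gap at exactly the step you flagged as ``routine.'' The verification that $R=(F\times F)^{-1}(\Delta)$ is an embedded submanifold of codimension $r$ with $\pr_1|_R$ submersive works only for pairs $(x,y)$ in a common fiber at which the local images $F(U_x)$ and $F(U_y)$ coincide as germs of $r$-dimensional slices through $q=F(x)=F(y)$. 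When $C$ self-intersects at $q$ --- a situation the theorem must allow, since the paper devotes half its proof to it --- those two slices are distinct, and locally $R$ consists of pairs $(a,b)$ with $F(a)=F(b)$ constrained to the (transverse, hence lower-dimensional) intersection of the slices; near such $(x,y)$ the set $R$ has dimension $2(\dim M-r)$ rather than $2\dim M-r$, so it is not an embedded submanifold of the required type, $\pr_1|_R$ is not a submersion there, and Godement's hypotheses fail. Concretely, take $M=S^1$ and $F$ a figure-eight immersion into $\mathbb{R}^2$ (constant rank $1$): then $M/\!\sim$ is a wedge of two circles, which is not a manifold, so no choice of charts will make your $\phi$ a submersion onto it.

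The repair is forced toward the paper's construction: over a self-intersection point of $C$ you must keep the several branches of the fiber $F^{-1}(q)$ separate in $C'$ rather than collapsing them to one point; that is, the correct relation identifies $x$ and $y$ only when $F(x)=F(y)$ \emph{and} the germs of $\image F$ at the two points agree (equivalently, $x$ and $y$ lie on the same local leaf of the constant-rank foliation). With that finer relation the Godement computation really does reduce to the diagonal-type local model you describe, the descended map $i$ is an immersion (no longer injective at the finitely many self-intersections, but that is all the theorem claims), and the rest of your argument --- $F=i\circ\phi$ and $U=\pr_1\circ i$ --- goes through and recovers the paper's conclusion. In the special case where $\image F$ has no self-intersections your proof is already complete, and is arguably cleaner than the paper's.
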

\begin{proof}
  \begin{enumerate}
  \item By Lemma \ref{lem:toprank}, $dF$ can be constructed so that it has constant rank, so $C$ is an immersed submanifold \cite[Thm. 7.13]{Lee_2003}.  Let $i: C' \to C$ be the immersion.  Without loss of generality, assume that self-intersections of $C'$ are transverse.  Self-intersections are therefore finite sets, because they have dimension
    \begin{equation*}
      2 \dim C' - (m+1) \dim N \le (1 - m) \dim C' \le 0
      \end{equation*}
    since $\dim C' \le \dim N$ by construction.
  \item $F$ is surjective onto $C$ by construction, so we wish to construct a surjective $\phi$ so that the diagram
    \begin{equation*}
      \xymatrix{
        M \ar[r]^F \ar[d]_\phi& C\\
        C' \ar[ur]_i & \\
      }
    \end{equation*}
    commutes.  The only issue is when the image of $C'$ intersects itself, because away from those self-intersections, $i$ is injective.  Let $x\in M$ be such that $F(x)$ is at a place where $C'$ intersects itself in $C$.  We assumed self-intersections of $C'$ are transverse, so there are finitely many preimages $y_1,\dotsc, y_p$ of $F(x)$ in $C'$ which could be chosen as $\phi(x)$.   Because $dF$ is of constant rank and because the self-intersections are transverse, $dF$ will take the tangent space at $x\in M$ to exactly one of the images of the tangent spaces $T_{y_j}$ through $i$.  We simply let $\phi(x) = y_j$, and define $U = \pr_1 \circ i$ to obtain the quasiperiodic factorization of $u$.
  \end{enumerate}
\end{proof}

\subsection{Universal quasiperiodic factorizations}
Although there are many quasiperiodic factorizations of a smooth function, they are related to one another.  Although $(F,\pr_1)$ may differ from $(\phi,U)$, its use in the QPLPF will not destroy the structure of $u$.

\begin{df}
  The quasiperiodic factorizations of $u:M \to N$ form a category ${\bf QuasiP}(u)$ in which the objects are quasiperiodic factorizations $(\phi,U)$, the morphisms $(\phi,U)\to(\phi',U')$ are commutative diagrams of the form
  \begin{equation*}
    \xymatrix{
      M \ar[r]^\phi \ar[d]^{\phi'}& C \ar[dl]\ar[d]^U \\
      C' \ar[r]^{U'} & N \\
      }
  \end{equation*}
\end{df}

\begin{eg}
  The category ${\bf QuasiP}(u)$ is usually not finite: consider $u(x) = \sin x$, because then if $\phi:\mathbb{R}\to S^1$, $U$ can represent any finite number of periods of $\sin$ on $S^1$.
\end{eg}

\begin{prop} \cite[Thm. 5]{RobinsonSampTA2015}
\label{prop:uqf}
  If $u:M \to N$ is a smooth map, the category ${\bf QuasiP}(u)$ has a unique final object called the \emph{universal quasiperiodic factorization} of $u$.  It also has a trivial initial object $(\text{id},u)$.  The category ${\bf QuasiP}(u)$ also has coproducts, which allow one to constructively reduce the phase space.
\end{prop}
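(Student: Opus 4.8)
The plan is to first observe that ${\bf QuasiP}(u)$ is a \emph{thin} category, so that all of the universal constructions reduce to order-theoretic statements about quotients of $M$. Indeed, a morphism $(\phi,U)\to(\phi',U')$ is a smooth map $\psi:C\to C'$ with $\psi\circ\phi=\phi'$ and $U'\circ\psi=U$; since $\phi$ is surjective, the equation $\psi\circ\phi=\phi'$ determines $\psi$ on all of $C=\image\phi$, so there is at most one morphism between any two objects. Writing $(\phi,U)\le(\phi',U')$ when such a $\psi$ exists, the same equation forces each fiber of $\phi$ into a single fiber of $\phi'$, so that $\le$ is exactly the refinement preorder on the partitions of $M$ cut out by the phase maps, restricted to those partitions whose quotient is a submersion along which $u$ is constant. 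Under this dictionary an initial object is a minimum (finest partition), a final object is a maximum (coarsest partition), and a coproduct is a binary least upper bound, i.e.\ a join of partitions.

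First I would dispatch the initial object. For $(\id_M,u)$ the partition is by single points, which refines every other partition, so $(\id_M,u)\le(\phi',U')$ for every object, the unique witnessing morphism being $\phi'$ itself. Next I would build binary coproducts as joins: given $(\phi_1,U_1)$ and $(\phi_2,U_2)$, form the equivalence relation on $M$ generated by the union of the two fiber relations and let $\phi$ be the quotient map. Because $u$ is constant along the fibers of each $\phi_i$, it is constant along every chain generating the join, hence descends to a map $U$ on the quotient; the quotient maps $C_i\to C$ serve as the coproduct injections, and once $C$ is known to be a manifold the join property of partitions yields the coproduct universal property automatically.

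For the final object I would argue by maximality within the poset of submersion-compatible partitions of $M$ ordered by coarseness. Zorn's lemma applies: the join of any chain is again an equivalence relation, furnishing an upper bound, so a maximal compatible partition $(\phi^\ast,U^\ast)$ exists. Uniqueness then follows from the coproducts constructed above: if $A$ and $B$ were both maximal, their coproduct $A\vee B$ would satisfy $A\le A\vee B$ and $B\le A\vee B$ while remaining compatible, forcing $A=A\vee B=B$. Thus the maximal object is unique, receives a morphism from every object, and is therefore the sought final object; the coproduct construction simultaneously supplies the advertised mechanism for constructively reducing the phase space.

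The hard part, in both the coproduct and the final-object steps, is the differential topology rather than the category theory: I must show that the quotient of $M$ by a join of submersion-compatible equivalence relations is again a smooth manifold with the projection a surjective submersion. This is exactly where the smoothness of $u$ and the geometry of $\ker du$ enter, and where $du$ may drop rank. The natural tool is the quotient manifold theorem / Godement's criterion \cite{Lee_2003}, which demands that the relation be a closed embedded submanifold of $M\times M$ whose first projection is a submersion; verifying that joins preserve this regularity — and in particular that the chain upper bounds required for the Zorn argument do not leave the smooth category — is the crux, and is presumably where the full argument of \cite[Thm.~5]{RobinsonSampTA2015} concentrates its effort.
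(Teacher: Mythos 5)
First, note that the paper does not actually prove this proposition: it is imported wholesale from \cite[Thm.~5]{RobinsonSampTA2015}, so there is no in-paper argument to compare yours against. Judged on its own terms, your order-theoretic skeleton is sound and is the natural way to read the statement: the category is indeed thin (surjectivity of $\phi$ forces uniqueness of the comparison map $\psi$), morphisms correspond to coarsenings of the fiber partitions, $(\id_M,u)$ gives the finest such partition and hence the initial object, coproducts are joins, and a final object is a maximum.

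The genuine gap is exactly the point you name and then defer: you never show that the join of two submersion-compatible partitions, or the upper bound of a Zorn chain of them, is again an object of ${\bf QuasiP}(u)$, i.e.\ that the quotient is a smooth manifold and the projection a surjective submersion. This is not a technicality to be waved at Godement's criterion; it can genuinely fail for the naive set-theoretic join. Two foliations of the torus by lines of distinct irrational slopes are each the fibers of surjective submersions, yet the equivalence relation they generate has dense classes, so its quotient is not Hausdorff, let alone a manifold. One therefore cannot define the coproduct as ``quotient by the generated relation'' without substantial further argument (passing to closures of classes, or to the leaf space of the integrable distribution spanned by the two vertical distributions, using constancy of $u$ to control it). The same defect infects the Zorn step: the union of a chain of compatible relations has no a priori smooth quotient, so chain-completeness of your poset is unproved and maximal elements are not guaranteed. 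Since existence of the final object and of coproducts is precisely the content of the cited theorem, reducing them to ``joins exist in the poset of partitions'' without establishing closure under these operations in the smooth category does not prove the proposition. (The surrounding results, Proposition \ref{prop:uqf_dim} and Theorem \ref{thm:phase_recovery}, suggest the final object is instead built directly --- by quotienting the phase space along $\ker dU$ or realizing it as the image of a constant-rank delay map --- rather than by a maximality argument.)
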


Quasiperiodic factorizations impose specific restrictions on the ranks of the derivatives of $\phi$ and $U$.

\begin{lem}
  \label{lem:quasirank}
  If $(\phi,U)$ is any quasiperiodic factorization of $u$, then
  \begin{equation*}
    \rank du(x) \le \min\left\{\rank d\phi(x), \rank dU(\phi(x))\right\} \le \rank d\phi.
  \end{equation*}
  and $\rank dU(\phi(x)) = \rank du(x)$ for all $x\in M$.
\end{lem}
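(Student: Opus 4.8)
The plan is to differentiate the factorization $u = U \circ \phi$ and to read off every claim from the chain rule together with the defining property that $\phi$ is a surjective submersion. First I would fix an arbitrary $x \in M$ and apply the chain rule to obtain
\begin{equation*}
  du(x) = dU(\phi(x)) \circ d\phi(x),
\end{equation*}
expressing $du(x)$ as the composite of the linear maps $d\phi(x) : T_x M \to T_{\phi(x)} C$ and $dU(\phi(x)) : T_{\phi(x)} C \to T_{u(x)} N$.

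The first inequality is then pure linear algebra. For any composite of linear maps $A \circ B$ we have $\image(A \circ B) = A(\image B)$, so $\rank(A \circ B) = \dim A(\image B)$ is bounded above both by $\dim \image B = \rank B$ and by $\dim \image A = \rank A$. Taking $A = dU(\phi(x))$ and $B = d\phi(x)$ yields $\rank du(x) \le \min\{\rank d\phi(x), \rank dU(\phi(x))\}$ immediately. For the second inequality I would invoke that $\phi$ is a submersion, which is part of the definition of a quasiperiodic factorization: this makes $d\phi(x)$ surjective, so $\rank d\phi(x) = \dim C$ is constant in $x$, and the minimum above is in particular bounded by this common value, written $\rank d\phi$.

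The final equality $\rank dU(\phi(x)) = \rank du(x)$ is where the submersion hypothesis does the real work. Because $d\phi(x)$ is surjective, $\image d\phi(x) = T_{\phi(x)} C$, and therefore
\begin{equation*}
  \image du(x) = dU(\phi(x))\bigl(\image d\phi(x)\bigr) = dU(\phi(x))\bigl(T_{\phi(x)} C\bigr) = \image dU(\phi(x)).
\end{equation*}
Equal images force equal dimensions, so the ranks agree.

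I do not expect a genuine obstacle here; the lemma is essentially a packaging of the chain rule and the rank inequality for composites. The only point requiring care is bookkeeping about which differential is surjective: it is the surjectivity of $d\phi(x)$ (and nothing about $dU$) that both makes $\rank d\phi$ a constant and upgrades the generic inequality $\rank du(x) \le \rank dU(\phi(x))$ to the stated equality.
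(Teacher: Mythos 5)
Your proof is correct and follows essentially the same route as the paper's: differentiate $u = U \circ \phi$, bound the rank of the composite by the ranks of the factors, and use the submersion hypothesis to make $\rank d\phi = \dim C$ constant. The only difference is in packaging: where the paper invokes Sylvester's rank inequality (which, combined with $\rank d\phi = \dim C$, yields $\rank dU(\phi(x)) \le \rank du(x)$), you obtain the same reverse inequality directly by observing that surjectivity of $d\phi(x)$ forces $\image du(x) = \image dU(\phi(x))$ --- an equally valid and arguably more self-contained argument.
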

\begin{proof}
  Merely note that the $\rank d\phi$ is constant because $\phi$ is a submersion.  Additionally, by Sylvester's inequality, if $\phi : M \to C$,
  \begin{eqnarray*}
    \rank d\phi + \rank dU(\phi(x)) - \dim C& \le& \rank du(x)\\
    \rank dU(\phi(x)) &\le & \rank du(x)\\
  \end{eqnarray*}
  from which the result follows.
\end{proof}

The universal quasiperiodic factorization involves the unique minimal phase space.

\begin{prop}
  \label{prop:uqf_dim}
  If $(\phi,U)$ is a universal quasiperiodic factorization, then
  \begin{equation*}
    \rank d\phi = \max_{y \in M} \rank du(y).
  \end{equation*}
\end{prop}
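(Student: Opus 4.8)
The plan is to prove the two inequalities $\rank d\phi \ge \max_{y\in M}\rank du(y)$ and $\rank d\phi \le \max_{y\in M}\rank du(y)$ separately. The first direction is immediate from Lemma \ref{lem:quasirank}: for any quasiperiodic factorization, $\rank d\phi$ is constant (since $\phi$ is a submersion) and $\rank du(x) \le \rank d\phi$ for every $x\in M$, so taking the maximum over $x$ gives $\max_{y\in M}\rank du(y) \le \rank d\phi$. This applies in particular to the universal factorization, so only the reverse inequality requires work.

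For the reverse inequality, I would exhibit a single factorization whose phase map has derivative rank exactly $\max_{y\in M}\rank du(y)$ and then transport this bound through the universal property. Theorem \ref{thm:phase_recovery} supplies such a witness: the map $F$ built in Lemma \ref{lem:toprank} has constant rank $\rank dF(x)=\max_{y\in M}\rank du(y)$, and the theorem pulls $F$ back to a factorization $(\phi_F,U_F)$ with phase space $C_F=\image F$ for which $\rank d\phi_F=\max_{y\in M}\rank du(y)$.

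Because the universal factorization $(\phi,U)$ is the final object of ${\bf QuasiP}(u)$ (Proposition \ref{prop:uqf}), there is a unique morphism $(\phi_F,U_F)\to(\phi,U)$, that is, a smooth map $\psi$ from $C_F$ to the universal phase space satisfying $\phi=\psi\circ\phi_F$. The chain rule then gives $d\phi(x)=d\psi(\phi_F(x))\circ d\phi_F(x)$, and since the rank of a composite linear map cannot exceed the rank of either factor, $\rank d\phi(x)\le\rank d\phi_F(x)=\max_{y\in M}\rank du(y)$. As $\phi$ is a submersion this rank is constant, so $\rank d\phi\le\max_{y\in M}\rank du(y)$; combining with the first inequality yields the claimed equality.

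The main obstacle I anticipate is bookkeeping about the direction of the final-object morphism: it is essential that the arrow from my constructed factorization into the universal one expresses $\phi$ (not $\phi_F$) as the composite, since this is exactly what lets $\rank d\phi$ be bounded above by $\rank d\phi_F$. I must also confirm that the witness from Theorem \ref{thm:phase_recovery} is available, which relies on the compactness of $M$ and the transitive $G$-action inherited from the surrounding hypotheses. Finally, since final objects are unique up to isomorphism and diffeomorphisms preserve derivative rank, the quantity $\rank d\phi$ is well-defined independently of the chosen representative of the universal factorization, so the statement is unambiguous.
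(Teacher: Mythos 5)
Your argument is logically sound, but it is a genuinely different proof from the one in the paper. You prove the inequality $\max_y \rank du(y) \le \rank d\phi$ from Lemma \ref{lem:quasirank} and then obtain the reverse inequality \emph{directly}, by exhibiting the witness factorization $(\phi_F,U_F)$ from Lemma \ref{lem:toprank} and Theorem \ref{thm:phase_recovery} (whose phase map has constant rank $\max_y\rank du(y)$, since the immersion $i$ has injective derivative) and pushing it into the final object: the unique morphism $(\phi_F,U_F)\to(\phi,U)$ factors $\phi$ as $\psi\circ\phi_F$, so $\rank d\phi\le\rank d\phi_F$. You correctly identify the direction of the arrow, which is the delicate point. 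The paper instead argues by contradiction and never invokes Theorem \ref{thm:phase_recovery}: if $\rank d\phi>\max_y\rank du(y)$, then by Lemma \ref{lem:quasirank} the kernel of $dU$ is nontrivial at every point of $C$, so one can choose a nonvanishing vector field annihilated by $dU$, flow along it to get a one-parameter symmetry group $D_t$ of $U$, and descend $\phi$ to the strictly lower-dimensional quotient $C/D$, producing a morphism out of $(\phi,U)$ to a smaller factorization and contradicting finality. The practical difference is one of hypotheses: your route imports the assumptions of Theorem \ref{thm:phase_recovery} --- compactness of $M$ and a transitive action by diffeomorphisms --- which Proposition \ref{prop:uqf_dim} as stated does not assume, so strictly speaking you prove the proposition only in that restricted setting (adequate for the paper's application, but weaker than the stated claim). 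On the other hand, your argument avoids the analytic delicacies the paper's descent argument quietly relies on (global existence of a smooth nonvanishing section of $\ker dU$, completeness of the flow, and the quotient $C/D$ being a manifold), so where it applies it is arguably on firmer ground.
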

\begin{proof}
  If it happens that $\rank d\phi > \max_{y \in M} \rank du(y)$, then we can show the factorization is not universal.  Specifically, notice that by Lemma \ref{lem:quasirank}
  \begin{equation*}
    \dim \ker dU(y) > 0 
  \end{equation*}
  for all $y \in C$.  Thus, there is at least one nonvanishing, smooth vector field $v$ on $C$ that is annhiliated by $dU$.  Solving for the flow along $v$ yields a 1-parameter family of diffeomorphisms $D_t$.  The action of $D_t$ is a symmetry of $U$, namely for all $t\in \mathbb{R}$, $U \circ D_t = U$.
  Thus, $\phi: M \to C$ descends to the quotient $C / D$ -- whose dimension is strictly less than that of $C$ -- yielding a unique $\phi'$ making the diagram
  \begin{equation*}
    \xymatrix{
      M \ar[r]^{\phi} \ar[rd]_{\phi'}& C \ar[d] \ar[r]^{U} & N \\
      & C/D\ar[ru]_{U'} &\\
      }
  \end{equation*}
  commute.  Observe that $(\phi',U')$ is a quasiperiodic factorization, so we conclude that $(\phi,U)$ was not final in ${\bf QuasiP}(u)$ and therefore not a universal quasiperiodic factorization.
\end{proof}

\subsection{Noise performance}
\label{sec:noise_perf}
Performance of the QPLPF on noisy signals is governed both by Theorem \ref{thm:phase_recovery} and by the neighborhood size $S$.  We would like to minimize the recovery error in the $L^2$ norm,
\begin{align*}
  & \left\|(\text{QPLPF}\; \tilde{u})(x_i) - u(x_i) \right\| =  \left\| \frac{1}{1+ S}\sum_{[x_i,x_j]\in H} \tilde{u}(x_j) - u(x_i) \right\|\\
  & \le  \left\| \frac{1}{1+ S}\sum_{[x_i,x_j]\in H} u(x_j) - u(x_i)\right\| + \frac{\sigma}{\sqrt{1+S}}
\end{align*}
where we have used independence of the noise in the last step.  The first term above is the \emph{Stage 1 error} and the second term is the \emph{Stage 2 error}.  The Stage 2 error in the QPLPF is essentially the best that can be obtained without further knowledge of the statistics of $n$.

Given that $u$ is $(\phi,U)$-quasiperiodic, we can have substantially better control of the Stage 1 error.  Unless it is perfectly matched to the signal, a traditional filter has nonzero Stage 1 error even if there is no noise.  If there is no noise present and $S$ is small enough, so that
\begin{equation}
  \label{eq:neighborhood_size}
  S \le \# (\phi^{-1}(x_i) \cap X)
\end{equation}
for all $x_i \in X$, we have that $\phi(x_j) = \phi(x_i)$ for all adjacent pairs $(x_i,x_j)$.  This situation causes the Stage 1 error
\begin{align*}
  &\left\| \frac{1}{1+ S}\sum_{[x_i,x_j]\in H} u(x_j) - u(x_i)\right\| \\
  & = \left\| \frac{1}{1+ S}\sum_{[x_i,x_j]\in H} U\left(\phi(x_j)\right) - U\left(\phi(x_i)\right)\right\|
\end{align*}
to completely vanish for the QPLPF!  

\begin{prop}
When a $(\phi,U)$-quasiperiodic function $u$ with $\rank du(x) < \dim M$ for all $x$ is given as input to the QPLPF, the output is exactly $u$.
\end{prop}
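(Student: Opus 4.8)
The plan is to reduce the claim to the vanishing Stage~1 error already identified just before equation~(\ref{eq:neighborhood_size}), by showing that the rank hypothesis forces the fibers of the filter's own phase function $F$ to be positive-dimensional. First I would invoke Lemma~\ref{lem:toprank} to fix a set $\{g_1,\dots,g_m\}$ for which $F(x)=(u(x),u(g_1x),\dots,u(g_mx))$ has constant rank $\rank dF(x)=\max_{y\in M}\rank du(y)$. Since $M$ is compact the maximum is attained, and the hypothesis $\rank du(x)<\dim M$ for all $x$ gives $\rank dF<\dim M$ everywhere.

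Next I would apply the constant-rank theorem to $F$: each level set $F^{-1}(c)$ is an embedded submanifold of $M$ of dimension $\dim M-\rank dF\ge 1$, hence positive-dimensional. The decisive observation is purely algebraic: the first coordinate of $F$ is $u$ itself, i.e. $u=\pr_1\circ F$, so $u$ is constant on every fiber of $F$. In particular, if $F(x_i)=F(x_j)$ then $u(x_i)=u(x_j)$.

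Now I would examine the nearest-neighbor graph $H$. In the noiseless case $\tilde u=u$, so the edges out of $x_i$ minimize $d(F(x_i),F(x_j))$, and this distance vanishes exactly when $x_j$ shares the fiber of $x_i$. Because that fiber is positive-dimensional, the discrete sample $X$ meets it in at least $S$ points, which is precisely condition~(\ref{eq:neighborhood_size}) applied to the filter's phase function $F$; hence the $S$ nearest neighbors of $x_i$ all lie on its fiber. By the previous paragraph $u(x_j)=u(x_i)$ for every such neighbor, including $x_i$ itself, so
\begin{equation*}
  (\text{QPLPF}\;u)(x_i)=\frac{1}{1+S}\sum_{[x_i,x_j]\in H}u(x_j)=\frac{1+S}{1+S}\,u(x_i)=u(x_i).
\end{equation*}

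The main obstacle is the transition from the continuous rank statement to the discrete neighbor count: positive-dimensionality of the fibers guarantees a continuum of exact matches inside $M$, but the conclusion needs at least $S$ of them to land in the finite sample $X$. I would handle this exactly as in the passage establishing~(\ref{eq:neighborhood_size}), reading the rank hypothesis as the guarantee that the fibers are large enough for~(\ref{eq:neighborhood_size}) to hold; the remaining content is then just the identity $u=\pr_1\circ F$ and the consequent vanishing of the Stage~1 error.
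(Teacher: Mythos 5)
Your proposal is correct and lands on the same core idea as the paper's (one-sentence) proof: the rank hypothesis forces the fibers of the phase map to be positive-dimensional, which is what allows the neighborhood-size condition \eqref{eq:neighborhood_size} to hold, and then the Stage~1 error computation preceding the proposition gives exact recovery. The one genuine difference is which phase map you use. The paper argues about the fibers of the \emph{given} $\phi$, which is slightly imprecise: for an arbitrary $(\phi,U)$-factorization, Lemma \ref{lem:quasirank} only bounds $\rank du$ \emph{above} by $\rank d\phi$, so $\rank du(x)<\dim M$ does not by itself force $\dim C<\dim M$ (the trivial factorization $(\id_M,u)$ has zero-dimensional fibers). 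You instead route the argument through Lemma \ref{lem:toprank} and the filter's own map $F$, for which $\rank dF=\max_y\rank du(y)<\dim M$ is guaranteed, and whose level sets are what actually determine the graph $H$; together with the observation $u=\pr_1\circ F$ this is both more faithful to the algorithm and patches the gap (the paper could equivalently appeal to the universal factorization via Proposition \ref{prop:uqf_dim}). Both your argument and the paper's share the same residual assumption, which you correctly flag: positive-dimensional fibers make \eqref{eq:neighborhood_size} \emph{satisfiable}, but the conclusion still requires that the finite sample $X$ meet each fiber in at least $S$ points and that $S$ be chosen accordingly; neither proof closes that quantifier, and it should be read as a standing hypothesis on the sampling.
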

\begin{proof}
The condition $\rank du(x) < \dim M$ ensures that preimages of points through $\phi$ have dimension greater than zero, so that \eqref{eq:neighborhood_size} can be satisfied.
\end{proof}


When noise is present, there is a tradeoff between keeping $S$ small enough to satisfy \eqref{eq:neighborhood_size} but large enough to control the Stage 2 error.  The Stage 1 error is controlled both by $S$ and $m$ through the construction of the graph $H$.  A loose upper bound on the Stage 1 error is
\begin{align*}
  &\left\| \frac{1}{1+ S}\sum_{[x_i,x_j]\in H} U\left(\phi(x_j)\right) - U\left(\phi(x_i)\right)\right\| \\
  & \le \|U\|_\infty \max_{[x_i,x_j]\in H} \left\| \phi(x_j) - \phi(x_i)\right\| 
\end{align*}
Although noise does not enter into the norm expression, it does impact our construction of $H$.  If the phase function (and hence $H$ also) is known outright, then $S$ can be chosen optimally even in the face of noise.  Otherwise, the QPLPF must rely on its estimate $F$ of $\phi$ instead.
\begin{align*}
  & \|U\|_\infty \max_{[x_i,x_j]\in H} \left\| \phi(x_j) - \phi(x_i)\right\| \\
  & \approx \|U\|_\infty \max_{[x_i,x_j]\in H} \left\| F(x_j) - F(x_i)\right\| \\
  & \le \|U\|_\infty \max_{[x_i,x_j]\in H} \left(\left\| u(x_j) - u(x_i)\right\| + \frac{\sigma}{\sqrt{m}}\right)
\end{align*}
Again, if $S$ is small enough, then all $[x_i,x_j]\in H$ will satisfy $u(x_i) \approx u(x_j)$, so first term above will typically be small.  The second term will usually dominate for small amounts of noise, and this can be controlled by increasing $m$.

\section{Results}
\label{sec:results}
This section presents three experimental data sets that validate both the theory and implementation of the QPLPF.  The first two data sets are simulated, while the third set uses image data collected by a satellite.

\subsection{Performance on simulated data}

\begin{figure}
\begin{center}
\includegraphics[width=3in]{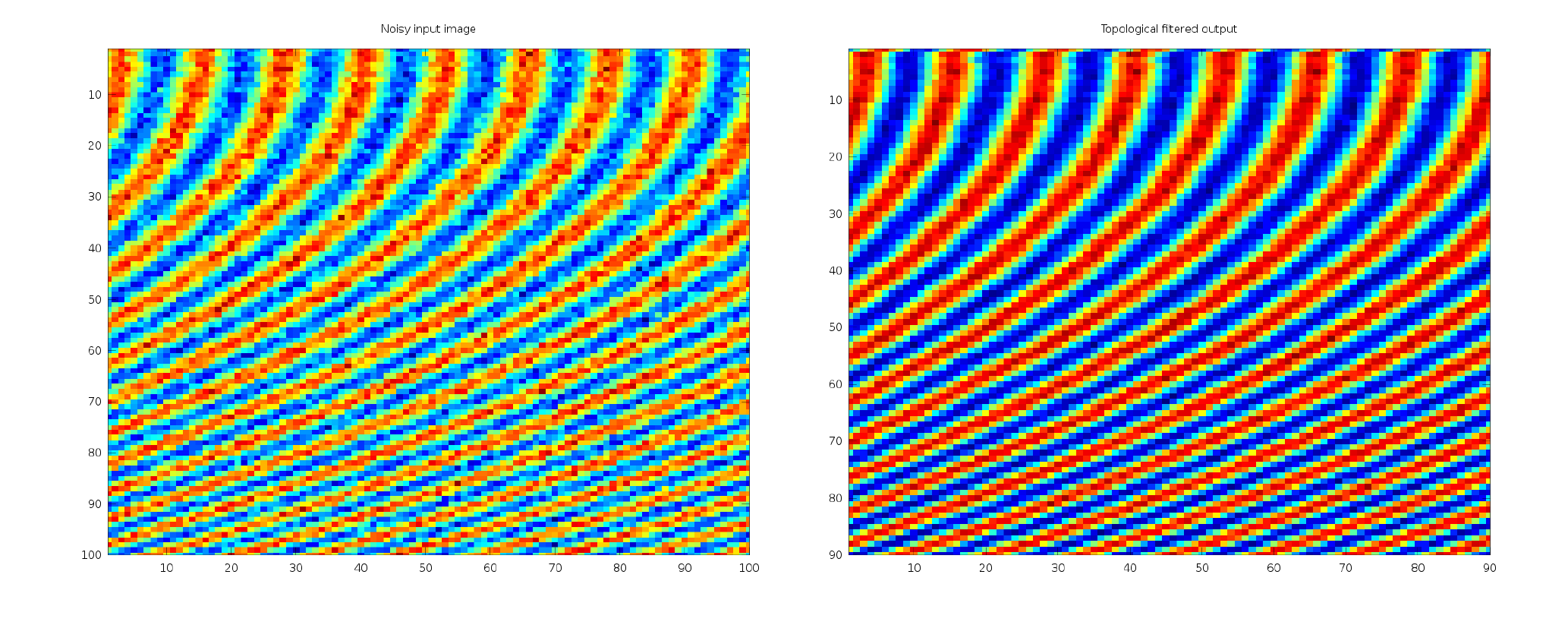} 
\caption{A noisy quasiperiodic image (left) and QPLPF output (right) when filtered with an matching window of 10 pixels and averaging 10 pixels.  Axes are in pixels.}
\label{fig:qpi_images}
\end{center}
\end{figure}

Figure \ref{fig:qpi_images} shows the performance of the QPLPF applied to a noisy quasiperiodic image (left).  The QPLPF output is shown at right, and shows a visible improvement over the entire image.

\begin{figure}
\begin{center}
\includegraphics[width=3in]{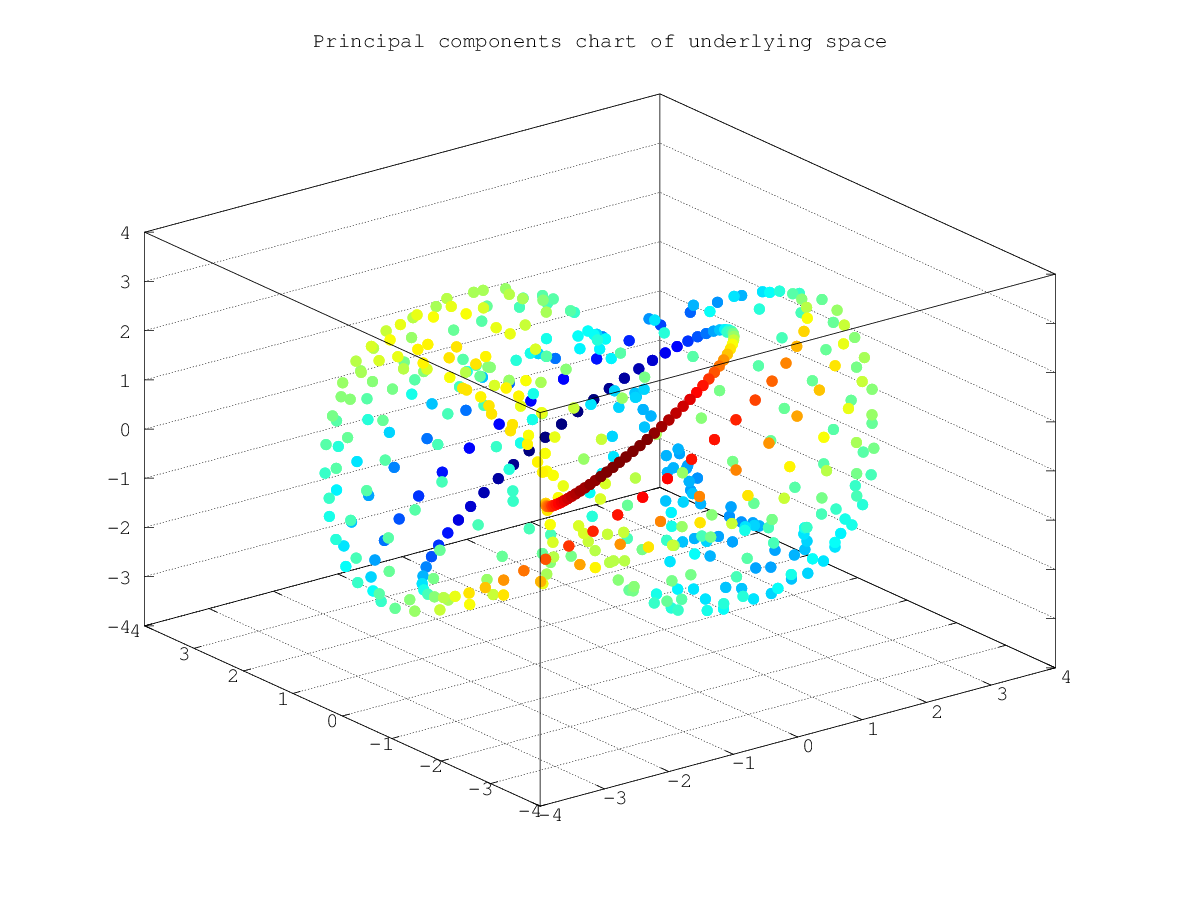} 
\caption{An example of the estimated space for the case of an LFM chirp.  Coordinates are the first four principal components ($x$, $y$, $z$, and color)}
\label{fig:lfm_space}
\end{center}
\end{figure}

Our implementation of the QPLPF on images is not particularly efficient, therefore for our statistical validation, we considered the discretized linear frequency modulated (LFM) chirp given by
\begin{equation}
  \label{eq:lfm}
u(t)=\sin\left(\frac{2\pi}{5} t (t+1)\right) + n(t)
\end{equation}
where $t=0, 1/50, \dotsc 10$ and $n(t)$ is additive white Gaussian noise.  This function is quasiperiodic, with a period that decreases with increasing $t$ over the given interval.  The output of Stage 1 of the QPLPF using a window size of (50 samples for the topological estimation stage and 15 samples for the averaging stage) is shown in Figure \ref{fig:lfm_space}, which suggests that the state space is a knotted circle.  The output of the QPLPF is shown as the red curve at right in Figure \ref{fig:lfm_output}.

\begin{figure}
\begin{center}
\includegraphics[width=3.5in]{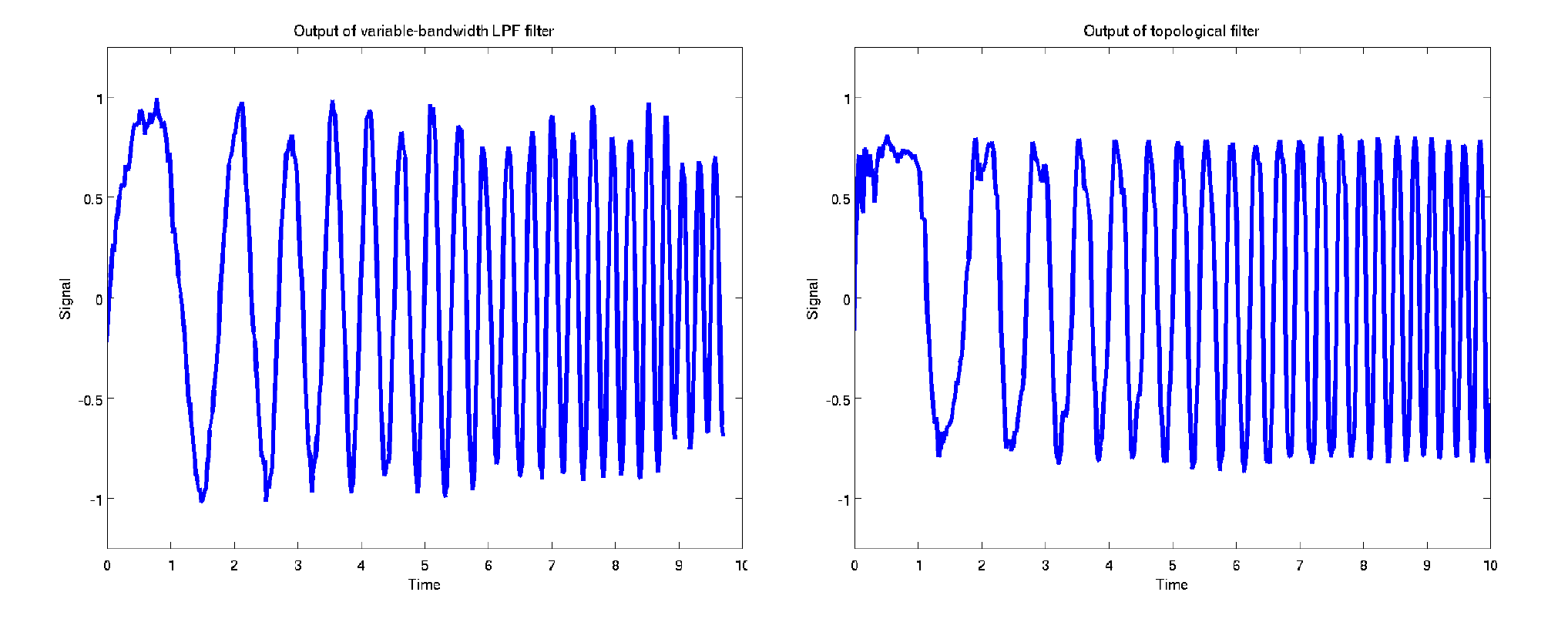} 
\caption{Comparison of output of a typical adaptive filter (left) and QPLPF (right) applied to a noisy LFM signal \eqref{eq:lfm}.  Both filters used a window size of 15 samples for averaging.  The QPLPF used a window size of 50 samples for topological estimation}
\label{fig:lfm_output}
\end{center}
\end{figure}

For comparison, the left frame of Figure \ref{fig:lfm_output} also shows the output of an adaptive variable-bandwidth filter, that uses as sliding window of 15 samples (same as the QPLPF) to estimate a local maximum frequency, and then sets the local averaging block size according to that frequency.  As the Figure shows, although the adaptive filter recovers the signal's frequency well, it does not produce a stable amplitude.  In contrast, the QPLPF does a better job of recovering the amplitude.  The QPLPF suffers no penalty as a function of SNR for this stability.

\begin{figure}
\begin{center}
\includegraphics[width=3.5in]{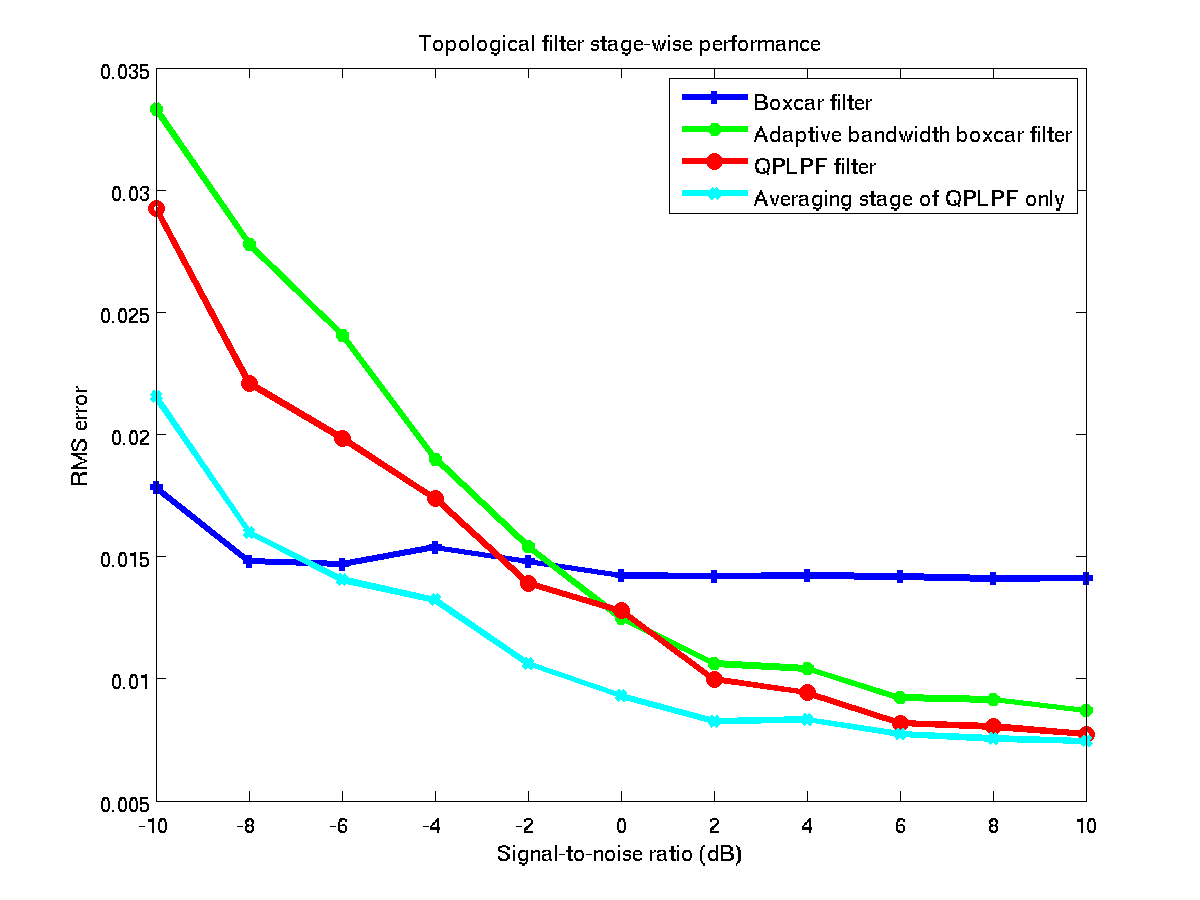} 
\caption{Comparison of the RMS filter error for a noisy LFM chirp as a function of SNR.  See text for window parameters.}
\label{fig:lfm_error_rms}
\end{center}
\end{figure}

\begin{figure}
\begin{center}
\includegraphics[width=3.5in]{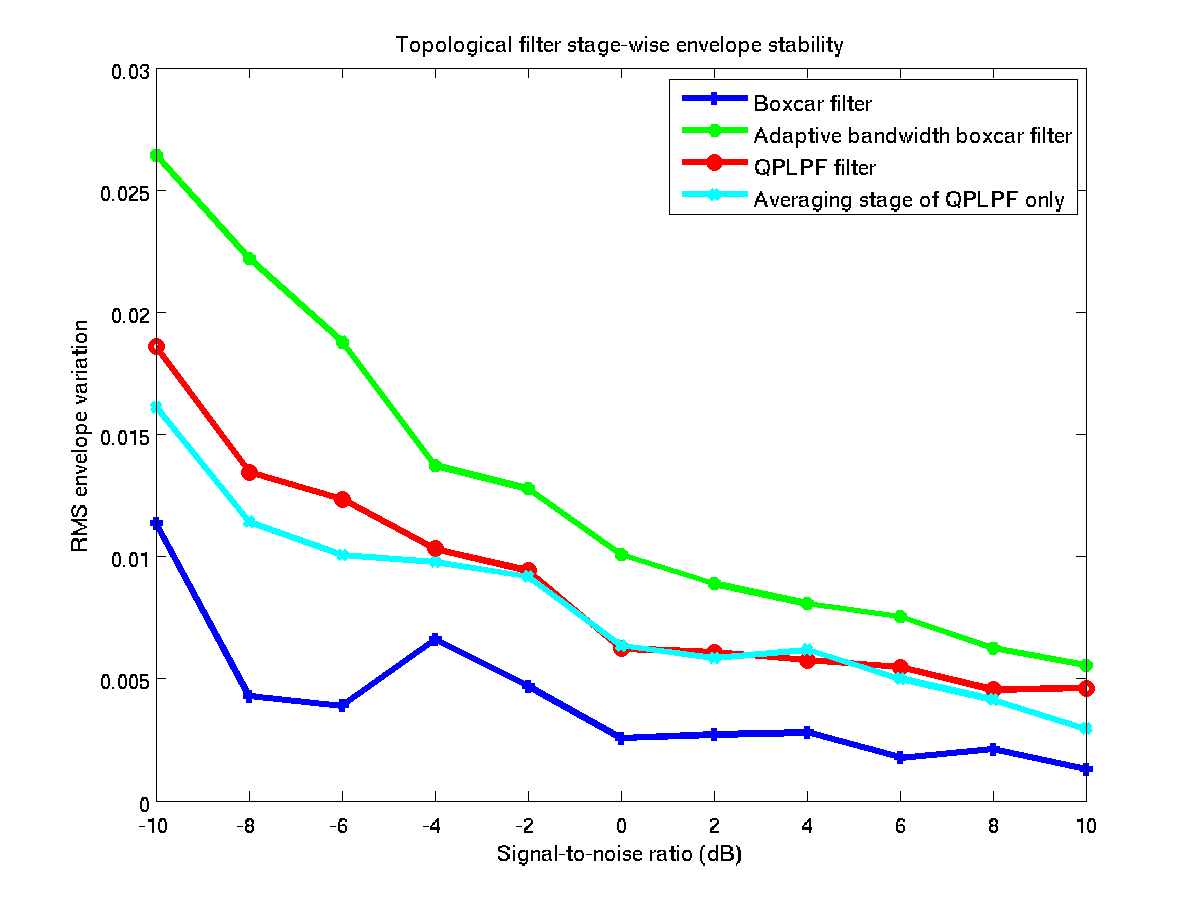} 
\caption{Comparison of the RMS envelope variability as a function of SNR.  See text for window parameters.}
\label{fig:lfm_error_env}
\end{center}
\end{figure}

Figures \ref{fig:lfm_error_rms} and \ref{fig:lfm_error_env} shows the performance of the QPLPF and the adaptive filter as a function of SNR for an LFM signal like what is shown in Figure \ref{fig:lfm_output}.  A boxcar filter and the averaging stage of the QPLPF using the true phase space -- both with a fixed window size of 11 samples -- are included for comparison.  The QPLPF used a window size of 50 samples for topological estimation and a window size of 11 samples for averaging. The adaptive boxcar filter used a window size of 50 samples for frequency estimation, and its averaging window was set adaptively at Nyquist based on this estimate.

The vertical axis of Figure \ref{fig:lfm_error_rms} shows the RMS difference between the original (noiseless) signal and the output of each filter.  Since the amplitude of the original signal was held constant at 1, the RMS measurement of the envelope of the ideal output should be zero.  The envelope signal is produced by linearly interpolating between peaks of the output signal.  The vertical axis of Figure \ref{fig:lfm_error_env} shows the RMS envelope of each output signal.  

The three variable-bandwidth filters (the QPLPF, the adaptive filter, and the QPLPF averaging stage) all exhibit improved RMS error and improved envelope stability as the SNR improves.  However, the QPLPF exhibits better performance when the SNR is lower.  The QPLPF exhibits considerably greater envelope stability than the adaptive filter, an effect which is most pronounced at low SNR.

\subsection{A maritime SAR image}

\begin{figure}
\begin{center}
\includegraphics[width=3in]{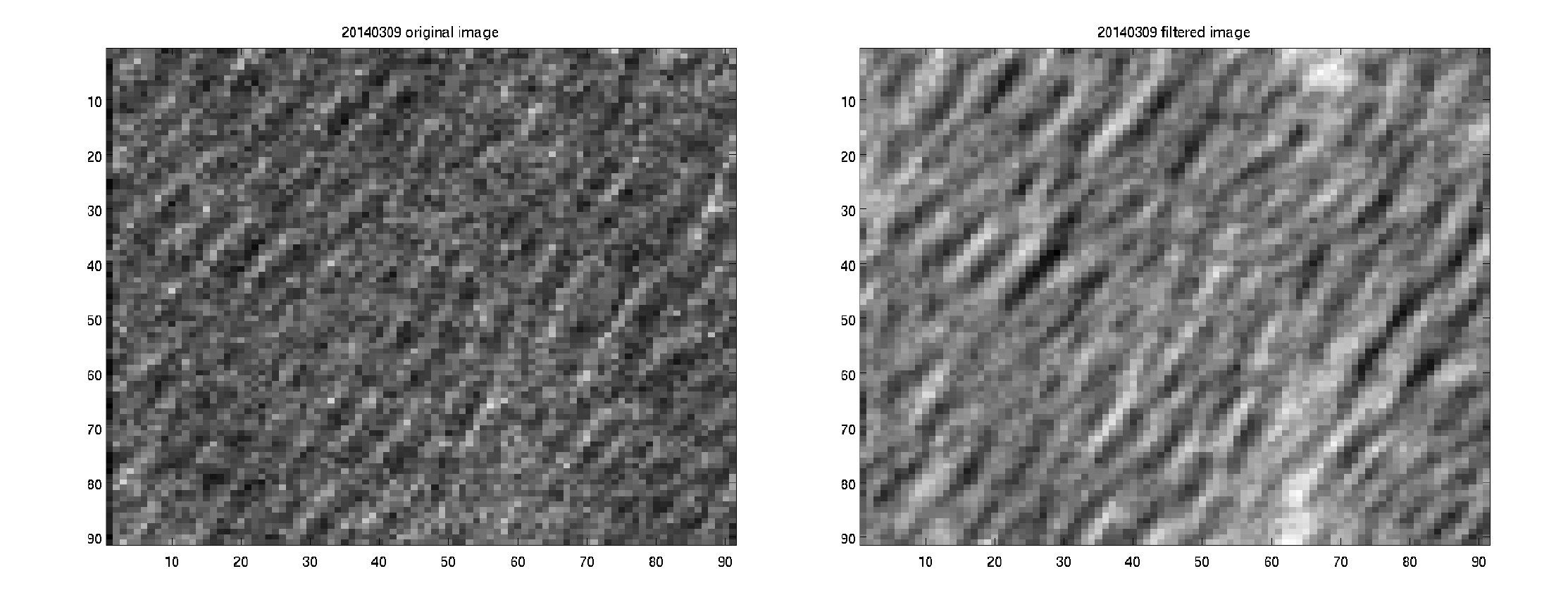} 
\caption{Ocean SAR image before (left) and after (right) the application of the QPLPF with topological estimation window of $10\times 10$ pixels and an averaging window of $150$ pixels.  Axes in pixels, each of which is a square, 25 meters on a side.  Images copyright \copyright DLR 2014.}
\label{fig:sar_image}
\end{center}
\end{figure}

\begin{figure}
\begin{center}
\includegraphics[width=3in]{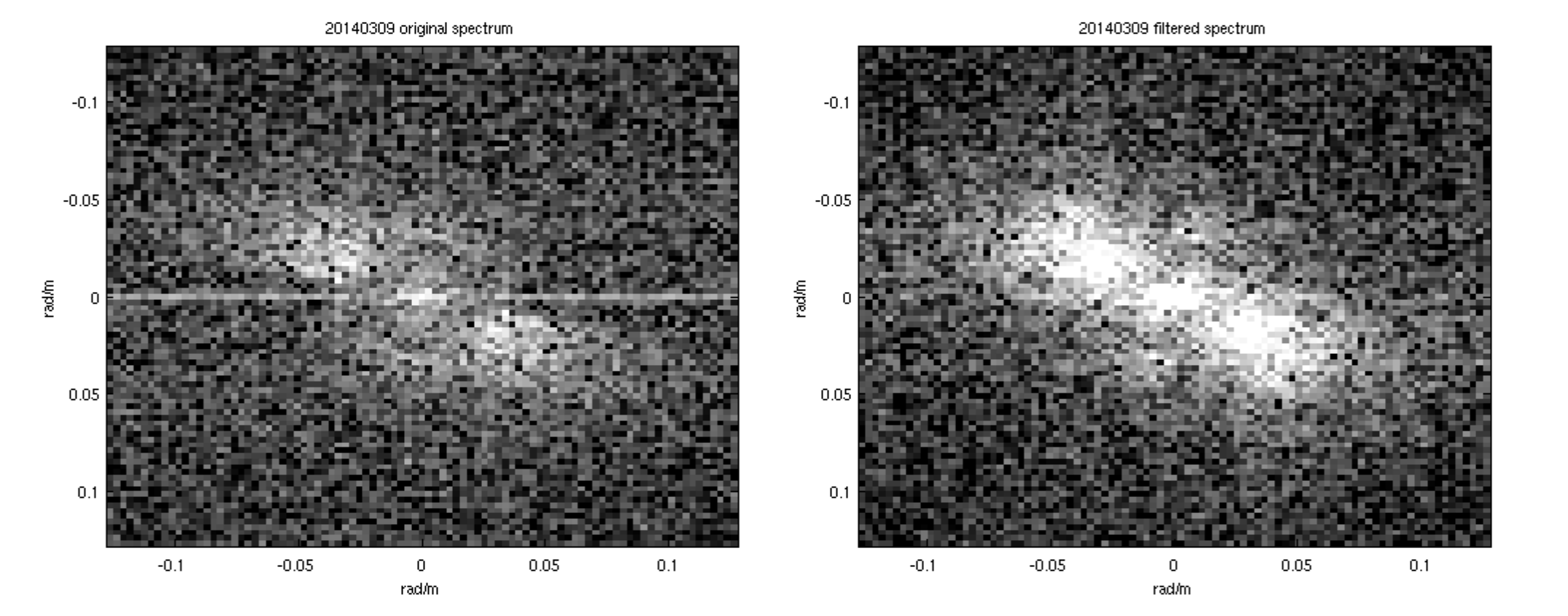} 
\caption{Spectrum of SAR images before (left) and after (right) the application of the QPLPF.  Axes in radian/meter}
\label{fig:sar_spectrum}
\end{center}
\end{figure}

This example demonstrates the QPLPF applied to the left frame of Figure \ref{fig:sar_image}, a $100\times 100$ pixel SAR image acquired by the German satellite TerraSAR-X on 9 March 2014 over the Gulf of Maine at 25 meters per pixel.  The diagonal striations in the image are produced by ocean swells that are roughly 80 meters in wavelength.  Figure \ref{fig:sar_spectrum} at left shows the 2d FFT of the image, from which the ocean wave spatial frequency and direction can be easily discerned.  Both the image and spectrum have been corrupted by speckle and noise, and the spectrum shows a horizontal streak artifact.  After applying the QPLPF with a matching window size of 10 pixels and a blocksize of 150 pixels, we obtain the images at right in Figures \ref{fig:sar_image} and \ref{fig:sar_spectrum}.  Notice that the QPLPF improves both the apparent contrast of individual waves and the SNR in the spectrum.

\section{Conclusion}

This article presented the QPLPF, a two-stage topological filter that performs averaging on an estimated phase space of a signal.  The correctness of this approach was proven theoretically, was demonstrated statistically on simulated data, and was exhibited on experimental data.


\section*{Acknowledgements}
The author would like to thank the American University Vice Provost for Graduate Studies and Research and the DC Space Grant Consortium for providing partial funding for this project.  Partial funding was also provided by the Office of Naval Research via Federal Contract No. N00014-15-1-2090.  The author also thanks the Deutsches Zentrum f\"{u}r Luft und Raumfahrt (DLR) for supplying the SAR imagery used on this project.

\bibliographystyle{IEEEbib}
\bibliography{qplpf_bib}

\begin{thebibliography}{10}

\bibitem{Mueller_2015}
Meinard M\"{u}ller,
\newblock {\em Fundamentals of Music Processing: Audio, Analysis, Algorithms,
  Applications},
\newblock Springer, 2015.

\bibitem{Singer_2013}
L.~Wang, A.~Singer, and Z.~Wen,
\newblock ``Orientation determination from {Cryo-EM} images using least
  unsquared deviations,''
\newblock {\em SIAM Journal on Imaging Sciences}, vol. 6, no. 4, pp.
  2450--2483, 2013.

\bibitem{Dabov_2007}
K.~Dabov, A.~Foi, V.~Katkovnik, and K.~Egiazarian,
\newblock ``Image denoising by sparse 3-d transform-domain collaborative
  filtering,''
\newblock {\em IEEE Trans Image Process}, vol. 16, no. 8, pp. 2080--2095, 2007.

\bibitem{Willett_2014}
R.~Willett,
\newblock ``The dark side of image reconstruction,''
\newblock {\em SIAM News}, vol. 47, no. 8, Oct 2014.

\bibitem{Singer_2011}
R.~Hadani and A.~Singer,
\newblock ``Representation theoretic patterns in three dimensional
  cryo-electron microscopy {II} -- the class averaging problem,''
\newblock {\em Foundations of Computational Mathematics (FoCM)}, vol. 11, no.
  5, pp. 589--616, 2011.

\bibitem{vanHeel_1984}
M.~van Heel,
\newblock ``Multivariate statistical classification of noisy images (randomly
  oriented biological macromolecules),''
\newblock {\em Ultramicroscopy}, vol. 13, no. 1-2, pp. 165--183, 1984.

\bibitem{DeSilva_2011}
V.~de~Silva, D.~Morozov, and M.~Vejdemo-Johansson,
\newblock ``Persistent cohomology and circular coordinates,''
\newblock {\em Discrete \& Computational Geometry}, vol. 45, no. 4, pp.
  737--759, 2011.

\bibitem{Takens_1981}
F.~Takens,
\newblock ``Detecting strange attractors in turbulence,''
\newblock in {\em Dynamical Systems and Turbulence}, D.~A. Rand and L.-S.
  Young, Eds., 1981, p. 366–381.

\bibitem{Chelidze_2008}
David Chelidze and Ming Liu,
\newblock ``Reconstructing slow-time dynamics from fast-time measurements,''
\newblock {\em Philosophical Transactions: Mathematical, Physical and
  Engineering Sciences}, vol. 366, no. 1866, pp. pp. 729--745, 2008.

\bibitem{Chelidze_2006}
D.~Chelidze and J.~P. Cusumano,
\newblock ``Phase space warping: Nonlinear time-series analysis for slowly
  drifting systems,''
\newblock {\em Philosophical Transactions: Mathematical, Physical and
  Engineering Sciences}, vol. 364, no. 1846, pp. pp. 2495--2513, 2006.

\bibitem{Casdagli_1991}
Martin Casdagli, Stephen Eubank, J.~Farmer, and John Gibson,
\newblock ``State space reconstruction in the presence of noise,''
\newblock Tech. {R}ep. 91-03-019, Santa Fe Institute, 1991.

\bibitem{Sauer_1991}
Tim Sauer, James~A. Yorke, and Martin Casdagli,
\newblock ``Embedology,''
\newblock {\em J. Statistical Physics}, vol. 65, no. 3, pp. 579--616, 1991.

\bibitem{Hegger_2002}
Rainer Hegger, Holger Kantz, and Lorenzo Matassini,
\newblock ``Method and apparatus for processing noisy sound signals,'' US
  Patent 6,502,067, December 2002.

\bibitem{Chazal_2009}
F.~Chazal, D.~Cohen-Steiner, and A.~Lieutier,
\newblock ``A sampling theory for compact sets in euclidean space,''
\newblock {\em Discrete Comput. Geom.}, vol. 41, pp. 461--479, 2009.

\bibitem{Chazal_2006}
F.~Chazal and A.~Lieutier,
\newblock ``Topology guaranteeing manifold reconstruction using distance
  function to noisy data,''
\newblock in {\em Proc. 22st ACM Sympos. Comput. Geom.}, 2006, p. 255–262.

\bibitem{Niyogi_2008}
Partha Niyogi, Stephen Smale, and Shmuel Weinberger,
\newblock ``Finding the homology of submanifolds with high confidence from
  random samples,''
\newblock {\em Discrete \& Computational Geometry}, vol. 39, no. 1-3, pp.
  419--441, 2008.

\bibitem{RobinsonSampTA2015}
Michael Robinson,
\newblock ``Universal factorizations of quasiperiodic functions,''
\newblock in {\em Sampling Theory and Applications (SAMPTA)}, May 2015.

\bibitem{Lee_2003}
J.~Lee,
\newblock {\em Smooth Manifolds},
\newblock Springer, 2003.

\end{thebibliography}

\end{document}